\newtheorem{defi}{\textbf{Definition}~}
\newtheorem{propos}{\textbf{Proposition}~}
\newcommand{\eg}{{\it e.g.}}
\newcommand{\ie}{{\it i.e.}}
\begin{document}
%
\title{ESSP: An Efficient Approach to Minimizing Dense and Nonsubmodular Energy Functions}
%
%
%
%

\author{Wei~Feng,~\IEEEmembership{Member,~IEEE,}
        Jiaya~Jia,~\IEEEmembership{Senior Member,~IEEE,}
        and~Zhi-Qiang~Liu,
\IEEEcompsocitemizethanks{\IEEEcompsocthanksitem W. Feng is with School of Computer Science and Technology, Tianjin University, China.
E-mail: wfeng@ieee.org (Correspondence author). 
\IEEEcompsocthanksitem J. Jia is with Department of Computer Science and Engineering, The Chinese University of Hong Kong, Hong Kong, China.\protect\\Z.-Q. Liu is with School of Creative Media, City University of Hong Kong, Hong Kong, China.}
\thanks{}}

\IEEEcompsoctitleabstractindextext{%
\begin{abstract}
Many recent advances in computer vision have demonstrated the impressive power of dense and nonsubmodular energy functions in solving visual labeling problems. However, minimizing such energies is challenging. None of existing techniques (such as s-t graph cut, QPBO, BP and TRW-S) can individually do this well. In this paper, we present an efficient method, namely ESSP, to optimize binary MRFs with arbitrary pairwise potentials, which could be nonsubmodular and with dense connectivity. We also provide a comparative study of our approach and several recent promising methods. From our study, we make some reasonable recommendations of combining existing methods that perform the best in different situations for this challenging problem. Experimental results validate that for dense and nonsubmodular energy functions, the proposed approach can usually obtain lower energies than the best combination of other techniques using comparably reasonable time.
\end{abstract}

\begin{keywords}
ESSP, dense and nonsubmodular energy minimization, MRF, image restoration.
\end{keywords}}

\maketitle

\IEEEdisplaynotcompsoctitleabstractindextext

%
\IEEEpeerreviewmaketitle

\section{Introduction}
\label{sec:intro}

Algorithms for discrete energy minimization play a fundamental role in computer vision and image analysis. Many early vision problems can be formulated as minimizing an energy function of the following form \begin{equation}\label{eq:mrf}
E(\mathbf{x}) = \theta_{const} + \sum_{u \in \mathcal{N}} \theta_u (x_u)  + \sum_{(u,v) \in \mathcal{M}} \theta_{uv} (x_u x_v) .
\end{equation}
where $\mathcal{N}$ is the set of pixels, $x_u \in \mathcal{L}$ denotes the label of pixel $u$, and neighborhood system $\mathcal{M}$ defines the pairwise dependency between pixels. Sets $\mathcal{N}$ and $\mathcal{M}$ jointly compose a graph, on which the energy function Eq.~(\ref{eq:mrf}) is defined. The exact meaning of label space $\mathcal{L}=\{0,1,\cdots,L\}$ depends on the specific problems, \eg, in image segmentation the labels are segment indices, while for stereo they represent disparities. In this paper, we focus on binary labeling problems, \ie, we consider only binary label set $\mathcal{L}=\{0,1\}$. For most early vision problems, the energy function of Eq.~(\ref{eq:mrf}) is derived from the ubiquitous MRF model \cite{em_ref:mrfcomp08}. \if 0 The optimal labeling with minimum energy corresponds to the \emph{maximum a posteriori} (MAP) inference of the MRF.\fi

Last decade has witnessed the great success of efficient energy minimization techniques in computer vision. Promising algorithms include graph cuts and QPBO \cite{em_ref:boyk01a,em_ref:kolm04,em_ref:kolm07,em_ref:roth07}, belief propagation (BP) \cite{em_ref:weiss01,em_ref:yedidia00,em_ref:ebpijcv06}, tree-reweighted message passing (TRW) \cite{em_ref:wain05,em_ref:kolmogorov06}, linear programming \cite{em_ref:werner07}, and convexity-related methods \cite{em_ref:ishikawa03,em_ref:kumar08,em_ref:kumar07}. These methods have triggered a significant progress in the state-of-the-art of many early vision problems, such as image segmentation \cite{seg_ref:svlmrf10,ia_ref:feng08}, stereo \cite{em_ref:boyk01a,em_ref:tappen03} and restoration \cite{em_ref:boyk01a,em_ref:kolm07}. Recently, the performance of these techniques in minimizing energy functions defined on $4$-connected grid graphs has been extensively studied \cite{em_ref:mrfcomp08,em_ref:tappen03,em_ref:meltzer05}. These studies consistently conclude that for some visual labeling problems, such as stereo, existing methods are able to obtain a near global minimum solution efficiently. However, it has also been shown that the optimal labeling proposed by global minimum energy has larger error statistics than other suboptimal solutions for benchmark stereo pairs \cite{em_ref:meltzer05}. This indeed reflects the deficiency of the energy model itself, and suggests that further improvements can only be achieved by using more complicated and powerful models.

Most recently, some sophisticated models, such as random fields with higher connectivity \cite{em_ref:kolm06,FoE:RothFoE} and with higher-order cliques \cite{em_ref:rkfj09,em_ref:woodford08,FoE:LanHighOrderBP,em_ref:ormrf12}, have shown their impressive potentials in handling difficult visual labeling problems. Despite the very different formulations of these models, they usually rely on minimizing some dense and nonsubmodular energy functions \cite{em_ref:rkfj09,em_ref:gcmrf11}. However, minimizing such energy functions is challenging for existing techniques \cite{em_ref:mrfcomp08,em_ref:kolm06}.

In this paper, we present a new method to optimize general MRFs with arbitrary pairwise potentials. Since the energy function of a binary MRF is a quadratic Pseudo-boolean function (QPBF). Our algorithm can also be viewed as a general approximate solution to dense and nonsubmodular QPBF minimization \cite{em_ref:boros02}. The core of our algorithm is an extended submodular-supermodular procedure (ESSP) that expands the classical submodular-supermodular procedure (SSP) \cite{em_ref:nara05} to minimize QPBFs of any type. Specifically, we present an undirected graph characterization of QPBF that has two desirable properties: (1)~it enables automatic submodular-supermodular decomposition for a QPBF; and (2)~it transforms a QPBF to a symmetric set function, thus providing an efficient way to suppress the supermodular part of the QPBF and to apply more efficient graph cut solutions \cite{em_ref:queyranne98,em_ref:onmgc2012}. The first property extends the application scope of SSP to general QPBF minimization; and the second property enables us to control the accuracy of our optimization and to balance the accuracy and complexity.

\if 0To help well understand the hardness of the problem and the pros and cons of different energy minimization methods,\fi We provide a thorough comparative study on the performance of existing methods for dense and nonsubmodular QPBF minimization. We evaluated three important factors, \ie, \emph{connectivity}, \emph{supermodularity ratio} and \emph{unary guidance}, that closely affect the hardness of the problem from different aspects. From our study, we figure out several hardness situations of the problem, and make a reasonable recommendation of combining existing methods in each situation that performs the best. Experimental results also validate that for dense and nonsubmodular energy functions, the proposed ESSP algorithm can always be used to improve the results of the best combinations of existing methods with reasonable time.

\section{Related Work}
\label{sec:rw}

Function $f(\mathbf{x}):\{0,1\}^n \mapsto \mathbb{R}$ is a QPBF if it contains only unary and pairwise items, \ie, $f(\mathbf{x}) = \sum_i u(x_i) + \sum_{i,j} p(x_i, x_j)$. QPBF is a general energy functional for many problems in computer vision and machine learning. For instance, some combinatorial problems such as graph matching can be formulated as QPBF minimization.

QPBF minimization has been studied in discrete optimization for decades \cite{em_ref:boros02}. We know that if a QPBF is \emph{submodular}, it can be exactly minimized in polynomial time. However, minimizing nonsubmodular QPBFs is NP-hard. Only approximate methods exist for this general problem. One of such methods is SSP that is designed to minimize the difference of two submodular functions \cite{em_ref:nara05}. In this paper, we show that SSP can be generalized to minimize any QPBF based on an undirected graph characterization. 

QPBF is also a general form of the energy functions of binary MRFs. State-of-the-art methods for binary MRF labeling include BP \cite{em_ref:weiss01}, TRW \cite{em_ref:wain05,em_ref:kolmogorov06} and graph cuts \cite{em_ref:kolm04}. An important conclusion is that any submodular QPBF can be exactly minimized by solving an st-mincut problem on a directed graph \cite{em_ref:kolm04}. For nonsubmodular QPBFs, the roof duality \cite{em_ref:boros02} is used to find an optimal partial labeling, which is called QPBO and two variants P and I \cite{em_ref:kolm07,em_ref:roth07}. Our algorithm is in contrast to the directed graph formulation \cite{em_ref:kolm04} and QPBO(P,I) \cite{em_ref:roth07}. First, the undirected graph characterization maps a QPBF to a symmetric set function, thus making us easily suppress the supermodular part, which is important for the optimality of our algorithm. Second, we use extended SSP to handle nonsubmodular terms. As a result, our algorithm solves a minimum cut problem of an undirected graph, which needs only half the number of nodes of QPBO(P,I).

\section{The ESSP Algorithm}
\label{sec:method}

We now introduce the proposed ESSP algorithm in detail. The major idea is to convert QPBF minimization to a minimum cut problem on an undirected graph.

\subsection{Undirected Graph Characterization}

We first consider QPBFs of the following form
\begin{equation} \label{eq:qpbfstd}
E(\mathbf{x}) = \sum_{u \in \mathcal{N}} \theta_u x_u  + \sum_{(u,v) \in \mathcal{M}} \theta_{uv} x_u x_v ,
\end{equation}
where $\mathcal{N}$ denotes the set of variables need to be optimized, $\mathcal{M}$ represents the pairwise correlations between variables. For briefness, we use $n=\| \mathcal{N} \|$ and $m = \| \mathcal{M} \|$ denote the number of variables and the number of pairwise terms in $E(\mathbf{x})$, respectively. We show how to effectively characterize Eq.~(\ref{eq:qpbfstd}) by an undirected graph. Then, we will extend the method to represent general QPBFs as defined in Eq.~(\ref{eq:mrf}).

\begin{defi}[Graph characterization]
We say a function $f(\mathbf{x})$ is characterized by a graph $\mathcal{G}_{f(\mathbf{x})}$, iff for any particular labeling of $\mathbf{x}$, the function value of $f(\mathbf{x})$ (as induced by $\mathbf{x}$) is equal to the cut value of graph $\mathcal{G}_{f(\mathbf{x})}$ plus a constant.
\end{defi}

\begin{propos}
\label{prop:graphcharact}
For any QPBF with the form of Eq.~(\ref{eq:qpbfstd}), we have the following conclusions:
\begin{enumerate}
\item Linear monomial $\alpha x_u$ can be characterized by an undirected graph $\mathcal{G}_{\alpha x_u}$ (see Fig.~\ref{fig:graph_charact}(a)).
\item Quadratic monomial $\beta x_u x_v$ can be characterized by an undirected graph $\mathcal{G}_{\beta x_u x_v}$ (see Fig.~\ref{fig:graph_charact}(b)).
\item Any QPBF $E(\mathbf{x})$ with the form of Eq.~(\ref{eq:qpbfstd}) can be characterized by an undirected graph $\mathcal{G}_{E(\mathbf{x})}$.
\end{enumerate}
\end{propos}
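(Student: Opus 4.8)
\emph{Proof idea.} The plan is to build $\mathcal{G}_{E(\mathbf{x})}$ compositionally: first exhibit small gadget graphs that characterize a single linear monomial and a single quadratic monomial, then assemble $\mathcal{G}_{E(\mathbf{x})}$ by overlaying all the gadgets, using only the fact that the cut value of a graph is the sum of the contributions of its individual edges. Throughout, fix a reference node $v_0$ that by convention always lies on the ``label-$0$'' side of every cut, and identify a labeling $\mathbf{x}\in\{0,1\}^n$ with the cut that places node $u$ on the ``label-$1$'' side exactly when $x_u=1$ (so that $x_{v_0}=0$). The single computation used everywhere is that an edge $\{a,b\}$ of weight $w$ contributes $w\,\mathbf{1}[x_a\neq x_b]=w\,(x_a+x_b-2x_ax_b)$ to the cut value; in particular an edge $\{v_0,u\}$ contributes $w\,x_u$, and an edge $\{u,v\}$ with $u,v\in\mathcal N$ contributes $w\,(x_u+x_v-2x_ux_v)$. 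I allow the edge weights $w$ to be arbitrary real numbers; this is not merely convenient but necessary, since a nonsubmodular QPBF can never equal a constant plus the cut value of a graph with only nonnegative weights (such a cut function is always submodular).

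For part~1, let $\mathcal{G}_{\alpha x_u}$ be the single edge $\{v_0,u\}$ of weight $\alpha$; its cut value is exactly $\alpha x_u$, so $\alpha x_u$ is characterized with additive constant $0$. For part~2, let $\mathcal{G}_{\beta x_u x_v}$ consist of the edge $\{u,v\}$ of weight $-\beta/2$ together with the two edges $\{v_0,u\}$ and $\{v_0,v\}$, each of weight $\beta/2$; summing the three edge contributions from the formula above gives $\tfrac\beta2 x_u+\tfrac\beta2 x_v-\tfrac\beta2\,(x_u+x_v-2x_ux_v)=\beta\,x_ux_v$, so $\beta x_ux_v$ is characterized with additive constant $0$ as well. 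These are the gadgets drawn in Fig.~\ref{fig:graph_charact}(a)--(b). Note that the variable--variable edge $\{u,v\}$ receives weight $-\beta/2$, which is nonpositive precisely when the monomial is submodular ($\beta\le0$) and positive precisely when it is supermodular ($\beta>0$); collecting the negatively weighted variable--variable edges therefore isolates the supermodular part of $E$ automatically, which is the decomposition referred to in the introduction.

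For part~3, given $E(\mathbf{x})$ of the form of Eq.~(\ref{eq:qpbfstd}), build $\mathcal{G}_{E(\mathbf{x})}$ on the node set $\mathcal N\cup\{v_0\}$ by overlaying $\mathcal{G}_{\theta_u x_u}$ for every $u\in\mathcal N$ and $\mathcal{G}_{\theta_{uv}x_ux_v}$ for every $(u,v)\in\mathcal M$, and then collapsing any parallel edges between a given pair of nodes into one edge whose weight is their sum; take the additive constant to be the sum of the constants of the gadgets (here $0$). Since the cut value of a graph is additive over its edges, for every labeling $\mathbf{x}$ the cut value of $\mathcal{G}_{E(\mathbf{x})}$ equals $\sum_{u\in\mathcal N}(\text{contribution of }\mathcal{G}_{\theta_u x_u})+\sum_{(u,v)\in\mathcal M}(\text{contribution of }\mathcal{G}_{\theta_{uv}x_ux_v})$, which by parts~1--2 is exactly $E(\mathbf{x})$. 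Hence $\mathcal{G}_{E(\mathbf{x})}$ characterizes $E(\mathbf{x})$, proving the proposition.

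The argument is essentially bookkeeping, and the only point needing real care --- the ``main obstacle'', modest as it is --- is the sign handling: being explicit that negative edge weights must be permitted (and noting, as above, why they are unavoidable if the statement is to cover \emph{all} QPBFs), and verifying that merging parallel edges and letting one variable appear in many monomials never disturbs a cut value, which follows once more from additivity over edges. If one wants the sharper statement for the general form Eq.~(\ref{eq:mrf}), there is one extra routine step: expand each pairwise table $\theta_{uv}(x_u,x_v)$ as a constant plus linear terms in $x_u$ and $x_v$ plus a bilinear term $x_ux_v$, and fold the constants and linear parts into $\theta_{const}$ and the $\theta_u$'s; this reduces Eq.~(\ref{eq:mrf}) to the form of Eq.~(\ref{eq:qpbfstd}) already handled.
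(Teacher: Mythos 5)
Your proposal is correct and follows essentially the same route as the paper: the same edge-weight gadgets (an indicator--variable edge of weight $\alpha$ for the linear monomial, and weights $\beta/2$, $\beta/2$, $-\beta/2$ for the quadratic monomial, matching the capacities in Eq.~(\ref{eq:cap2})), combined via additivity of the cut value over edges. The paper merely asserts these verifications by pointing to Fig.~\ref{fig:graph_charact} and citing the additivity property, so your write-up is a more explicit version of the intended argument, with the added (correct) observations about the necessity of negative weights and the reduction of Eq.~(\ref{eq:mrf}) to Eq.~(\ref{eq:qpbfstd}) that the paper defers to Proposition~\ref{prop:qpbfgeneral}.
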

\begin{proof}
Referring to Fig.~\ref{fig:graph_charact}, it is easy to check the correctness of Conclusion~1 and 2. Conclusion~3 directly results from 1 and 2 due to the \emph{additivity property} of graph characterization \cite{em_ref:kolm04,em_ref:kolm07}.\footnote{Note that the undirected graph characterization has also been used in exact inference of planar graphs \cite{em_ref:effexact08}. In this paper, we use it to extend the classical SSP \cite{em_ref:nara05} to optimize QPBFs of any type.}
\end{proof}

\begin{figure}[t]
\centering
\includegraphics[width=0.66\columnwidth]{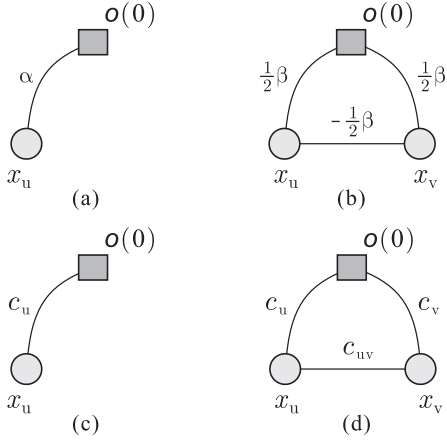}
\caption{Undirected graph characterization of QPBF: (a) $\mathcal{G}_{\alpha x_u}$ for linear monomial $\alpha x_u$; (b) $\mathcal{G}_{\beta x_u x_v}$ for quadratic monomial $\beta x_u x_v$; (c) $\mathcal{G}_{\theta_{u} (x_u)}$ for general unary function $\theta_{u} (x_u)$ with edge capacities defined in Eq.~(\ref{eq:cap1}) and (d) $\mathcal{G}_{\theta_{uv} (x_u, x_v)}$ for general quadratic function $\theta_{uv} (x_u, x_v)$ with edge capacities defined in Eq.~(\ref{eq:cap2}). Note that all graphs contain an indicator node $o$ that represents label $0$ and several variable nodes. For a cut on the graph, the nodes belonging to the same partition with $o$ are labeled as $0$, otherwise are labeled as $1$.}\label{fig:graph_charact}
\end{figure}

\begin{propos}
\label{prop:qpbfgeneral}
A general QPBF $E(\mathbf{x})$ can be characterized by an undirected graph $\mathcal{G}_{E(\mathbf{x})}$. So, $\arg\min_{\mathbf{x}} E(\mathbf{x})$ can be computed by solving a minimum cut problem on $\mathcal{G}_{E(\mathbf{x})}$.
\end{propos}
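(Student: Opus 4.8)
The plan is to derive Proposition~\ref{prop:qpbfgeneral} from Proposition~\ref{prop:graphcharact} by a degree-reduction (reparameterization) of the general potentials in Eq.~(\ref{eq:mrf}) into the standard monomial form of Eq.~(\ref{eq:qpbfstd}). First I would note that over the binary domain every unary potential is affine in its argument, $\theta_u(x_u) = \theta_u(0) + [\theta_u(1)-\theta_u(0)]\,x_u$, and every pairwise potential is multilinear, $\theta_{uv}(x_u,x_v) = a_{uv} + b_{uv}x_u + c_{uv}x_v + d_{uv}x_ux_v$, with $a_{uv}=\theta_{uv}(0,0)$, $b_{uv}=\theta_{uv}(1,0)-\theta_{uv}(0,0)$, $c_{uv}=\theta_{uv}(0,1)-\theta_{uv}(0,0)$ and $d_{uv}=\theta_{uv}(0,0)+\theta_{uv}(1,1)-\theta_{uv}(0,1)-\theta_{uv}(1,0)$. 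Substituting these identities into Eq.~(\ref{eq:mrf}) and regrouping by degree gives $E(\mathbf{x}) = \theta'_{const} + \sum_{u}\theta'_u x_u + \sum_{(u,v)} d_{uv}x_ux_v$, i.e. a function of the form of Eq.~(\ref{eq:qpbfstd}) up to the additive constant $\theta'_{const}$, which absorbs $\theta_{const}$ together with all the $\theta_u(0)$ and all the $a_{uv}$.

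Next I would invoke Conclusion~3 of Proposition~\ref{prop:graphcharact} on the standard-form part: it is characterized by an undirected graph obtained by overlaying, via the additivity of graph characterization, the per-monomial gadgets of Fig.~\ref{fig:graph_charact}(a)--(b); equivalently one may build $\mathcal{G}_{E(\mathbf{x})}$ directly from the unary and pairwise gadgets of Fig.~\ref{fig:graph_charact}(c)--(d). Adding back $\theta'_{const}$ is harmless because the definition of graph characterization already permits the function value and the cut value to differ by a constant, so for every labeling $\mathbf{x}$ the value $E(\mathbf{x})$ equals the cut value of $\mathcal{G}_{E(\mathbf{x})}$ plus a fixed constant. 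Finally, under the stated convention a cut $(S,\bar S)$ with the indicator node $o\in S$ is in one-to-one correspondence with a labeling $\mathbf{x}$ (label $0$ on $S$, label $1$ on $\bar S$); since the two objective functions differ only by a constant over this bijection, any minimum cut of $\mathcal{G}_{E(\mathbf{x})}$ yields an element of $\arg\min_{\mathbf{x}}E(\mathbf{x})$, and conversely. This establishes the claim.

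The step I expect to be delicate, rather than deep, is the sign bookkeeping. When $\theta_u(1)<\theta_u(0)$ the effective linear coefficient $\theta'_u$ is negative, and when the term is nonsubmodular, $d_{uv}>0$ forces a corresponding edge capacity in the quadratic gadget of Fig.~\ref{fig:graph_charact}(b)--(d) to be negative; hence in general $\mathcal{G}_{E(\mathbf{x})}$ carries signed edge weights and ``minimum cut'' must be understood in the purely combinatorial sense of minimizing the summed weights of cut edges over all bipartitions fixing $o$'s side, not as a max-flow instance. This is exactly what one should expect, since minimizing a nonsubmodular QPBF is NP-hard: Proposition~\ref{prop:qpbfgeneral} only asserts the exact equivalence of the two problems, and it is the extended submodular--supermodular procedure developed next that exploits the symmetric (undirected) structure of $\mathcal{G}_{E(\mathbf{x})}$ to attack this min-cut problem efficiently and approximately. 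I would make sure the write-up states explicitly that the bijection between cuts and labelings is valid regardless of these signs, so that the reduction itself is unconditional.
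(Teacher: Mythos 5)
Your proposal is correct and follows essentially the same route as the paper: expand the general unary and pairwise potentials into multilinear (monomial) form over $\{0,1\}$, regroup into the standard form of Eq.~(\ref{eq:qpbfstd}) up to an additive constant, and then invoke Proposition~\ref{prop:graphcharact} together with the additivity of graph characterization. Your explicit remarks on signed edge capacities and on reading ``minimum cut'' combinatorially rather than as a max-flow instance are a sound clarification of a point the paper leaves implicit, but they do not change the argument.
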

\begin{proof}
A general QPBF $E(\mathbf{x}) = \sum_{u \in \mathcal{N}} \theta_u (x_u) + \sum_{(u,v) \in \mathcal{M}} \theta_{uv} (x_u, x_v)$ can be transformed to the form of Eq.~(\ref{eq:qpbfstd}), in that $E(\mathbf{x}) = \sum_{u \in \mathcal{N}} \big[ \theta^0_u (1-x_u) + \theta^1_u x_u \big] + \sum_{(u,v) \in \mathcal{M}} \, \big[\theta^{00}_{uv} (1-x_u)(1-x_v) + \theta^{01}_{uv} (1-x_u)x_v + \theta^{10}_{uv} x_u(1-x_v) + \theta^{11}_{uv} x_ux_v \big]$. Then, the conclusion follows directly from Proposition~\ref{prop:graphcharact}. Note that, for briefness, we use $\theta^{x_u}_{u}$ to represent $\theta_{u} (x_u)$ and use $\theta^{x_u x_v}_{uv}$ to represent $\theta_{uv} (x_u, x_v)$, \eg, $\theta^{0}_{u} = \theta_{u} (x_u=0)$ and $\theta^{01}_{uv} = \theta_{uv} (x_u=0, x_v=1)$.
\end{proof}

Proposition~\ref{prop:qpbfgeneral} shows that our algorithm (as introduced in next section) is a general solution to QPBF minimization. Fig.~\ref{fig:graph_charact}(c) and (d) show graph $\mathcal{G}_{\theta_{u} (x_u)}$ and $\mathcal{G}_{\theta_{uv} (x_u, x_v)}$ that characterize general unary function $\theta_{u} (x_u)$ and general quadratic function $\theta_{uv} (x_u, x_v)$ respectively, of which the edge capacities are defined as
\begin{equation} \label{eq:cap1}
c_u = \theta^{1}_{u} - \theta^{0}_{u} ,
\end{equation}
\begin{equation} \label{eq:cap2}
\left\{\begin{array}{l} c_u = \frac{1}{2} (\theta^{10}_{uv} + \theta^{11}_{uv} - \theta^{01}_{uv} -\theta^{00}_{uv}) , \\ c_v = \frac{1}{2} (\theta^{01}_{uv} + \theta^{11}_{uv} - \theta^{00}_{uv} -\theta^{10}_{uv}) , \\ c_{uv} = \frac{1}{2} (\theta^{01}_{uv} + \theta^{10}_{uv} - \theta^{00}_{uv} -\theta^{11}_{uv}) .
\end{array} \right.
\end{equation}
Besides, if denoting the value of indicator node $o$ as $x_{o}$, the undirected graph characterization implies that any QPBF $E(\mathbf{x})$ can be converted to a symmetric set function $F(\mathbf{x},x_{o})$ that represents the cut value of $\mathcal{G}_{E(\mathbf{x})}$ \cite{em_ref:queyranne98},
\begin{equation} \label{eq:symmcutfunc}
F(\mathbf{x},x_{o}) = \sum_{(u,v) \in \{o,1,\cdots,n\}^2} c_{uv} (x_u + x_v - 2x_u x_v) ,
\end{equation}
where $(u,v)$ is an edge of $\mathcal{G}_{E(\mathbf{x})}$, $c_{uv}$ is its capacity. \if 0 and $n$ is the number of variable nodes in $\mathcal{G}_{E(\mathbf{x})}$.\fi

\subsection{Flipping Transformation}

For a QPBF $E(\mathbf{x})$, we construct its graph characterization $\mathcal{G}_{E(\mathbf{x})}$. Then, minimizing $E(\mathbf{x})$ is transformed to a minimum cut problem on graph $\mathcal{G}_{E(\mathbf{x})}$. We already know that $E(\mathbf{x})$ can be exactly minimized if it is submodular \cite{em_ref:boros02,em_ref:kolm04}. In practice, however, some recent powerful energy functions of real problems are usually nonsubmodular \cite{em_ref:rkfj09,em_ref:gcmrf11}. Minimizing nonsubmodular functions is generally NP-hard, thus approximate solutions are required. Before going to the details of our algorithm, we first introduce an important observation on nonsubmodular functions and an equivalent transformation on the undirected graph characterization.

\begin{figure}[t]
\centering
\includegraphics[width=0.72\columnwidth]{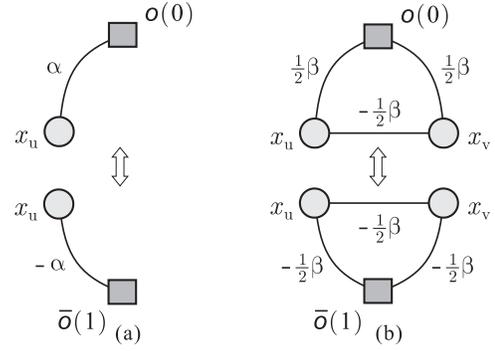}
\caption{Flipping equivalent transformation of indicator node $o$: (a) reparameterizing linear monomial; (b) reparameterizing quadratic monomial.}\label{fig:flip_trans_o}
\end{figure}

\noindent{\bf Flipping Variable.}~~For a QPBF $E(\mathbf{x})$ with $\mathbf{x}=[x_1, x_2, \cdots, x_n]^T$, flipping variable $\bar{x}_i$ is defined as $\bar{x}_i = 1-x_i$. Flipping variable has already been studied and used in the literature \cite{em_ref:boros02,em_ref:kolm07}. In this paper, we mainly use flipping variables to denote a new equivalent transformation. We observe that some nonsubmodular functions can be converted to submodular functions when flipping some original variables. For instance, $- x_1 x_2 + x_2 x_3$ is nonsubmodular, but it can be transformed to $- x_1 x_2 + x_2 (1-\bar{x}_3) = - x_1 x_2 - x_2 \bar{x}_3 + x_2$ that is a submodular function with variables $(x_1,x_2,\bar{x}_3)$. Unfortunately, not all nonsubmodular functions can be transformed to be submodular by this means, \eg, $- x_1 x_2 - x_1 x_3 + x_2 x_3$. But, we will see in next section the benefits of flipping variables in our algorithm.

\noindent{\bf Equivalent Transformation.}~~\emph{Equivalent transformation} refers to reparameterizing the original energy function $E(\mathbf{x})$ to $E'(\mathbf{x})$ that does not change the optimality of $E(\mathbf{x})$ \cite{em_ref:kolm07}. That is, for any two labelings $\mathbf{x}_1$ and $\mathbf{x}_2$, $E'(\mathbf{x}_1) \leq E'(\mathbf{x}_2) \Leftrightarrow E(\mathbf{x}_1) \leq E(\mathbf{x}_2)$. Since there are two kinds of nodes, \ie, indicator node $o$ and variable node $x_u$, in the undirected graph characterization, we have two types of flipping equivalent transformations.

The first type is about flipping the indicator node. As shown in Fig.~\ref{fig:flip_trans_o}, we can switch the meaning of indicator node from $o$ (denoting label $0$) to $\bar{o}$ (denoting label $1$) by negativing the capacities of all edges of $o$. This operation can also be conducted on only a subset of variable nodes. That is, we can use two indicator nodes, $o$ and $\bar{o}$, in graph $\mathcal{G}_{f(\mathbf{x})}$ to characterize $f(\mathbf{x})$. Generally, for an undirected graph $\mathcal{G}_{f(\mathbf{x})}$ characterizing function $f(\mathbf{x})$, flipping $o$ to $\bar{o}$ (or $\bar{o}$ to $o$) for variable nodes $x_i \in \mathcal{X}$ makes the graph exactly represent function $f(\mathbf{x})-\mu$, where $\mu$ is the sum of edges capacities linking indicator node to variable nodes in $\mathcal{X}$, where $\mathcal{X}$ is the set of all variables in $E(\mathbf{x})$.

\begin{figure}[t]
\centering
\includegraphics[width=0.9\columnwidth]{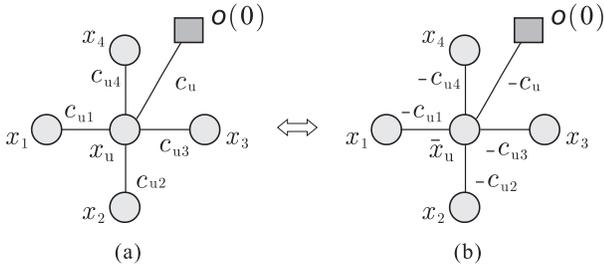}
\caption{Flipping equivalent transformation of variable node $x_u$ in (a) to $\bar{x}_u$ in (b) by negativing the capacities of all edges linking $x_u$ to other nodes, including variable nodes $x_i$ and indicator node $o$ (or $\bar{o}$), in the graph.}\label{fig:flip_trans_x}
\end{figure}

The second type is about flipping original variable $x_u$ to $\bar{x}_u$. As shown in Fig.~\ref{fig:flip_trans_x}, this can also be done by negativing the capacities of all edges of $x_u$. Flipping $x_u$ to $\bar{x}_u$ can be viewed as locally switching the meaning of label $0$ and label $1$. Specifically, for a QPBF $\theta_{uv}(x_u, x_v)$, its graph characterization can be constructed by setting the edge capacities as Eq.~(\ref{eq:cap2}). If flipping $x_u$ to $\bar{x}_u$, then: (1) indicator node $o$ become $\bar{o}$ with the capacity $c_u$ for $\bar{x}_u$, which can be reparameterized to $o$ with capacity $-c_u$; (2) the capacity of edge $(\bar{x}_u, x_v)$ is changed to $\frac{1}{2} (\theta^{00}_{uv} + \theta^{11}_{uv} - \theta^{01}_{uv} -\theta^{10}_{uv}) = -c_{uv}$.

We will show in next that \emph{flipping transformation} can be used to flip original variables in the energy function $E(\mathbf{x})$ to make the submodular part of $E(\mathbf{x})$ as large as possible, which is important for the optimality of our algorithm for QPBF minimization.

\subsection{Submodular-Supermodular Decomposition}

General QPBF minimization is challenging due to the existence of nonsubmodularity. But, there exist effective approximate solutions for some particular form of nonsubmodular functions. SSP is one of such methods and is designed to minimize the sum of a submodular function and a supermodular function \cite{em_ref:nara05}. We now extend SSP to minimize general QPBFs based on the undirected graph characterization, which provides us an convenient way to decompose any QPBF $E(\mathbf{x})$ to the sum of a submodular part $sub(E(\mathbf{x}))$ and a supermodular part $sup(E(\mathbf{x}))$. It is known that the cut function of an undirected graph is \emph{symmetric}, and \emph{submodular} if all edge capacities are nonnegative. This means that for a QPBF $E(\mathbf{x})$ we can automatically decompose $E(\mathbf{x})$ to $sub(E(\mathbf{x})) + sup(E(\mathbf{x}))$ by simply checking the positivity of edge capacities in $\mathcal{G}_{E(\mathbf{x})}$.


\begin{figure}[t]
\centering
\includegraphics[width=0.86\columnwidth]{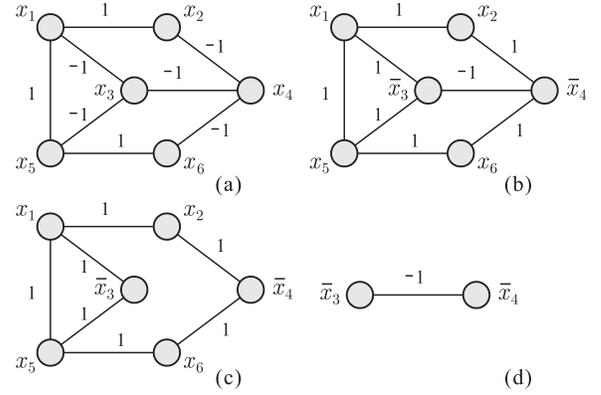}
\caption{An example of supermodular suppression for a graph with only $-1/1$ edge capacities: (a) graph $\mathcal{G}_f$ (with indicator node $o$ omitted); (b) graph $\mathcal{G}_{\bar{f}}$ after flipping $x_3$ and $x_4$, which can be decomposed to a submodular graph $\mathcal{G}_{sub(\bar{f})}$ (c) and a supermodular graph $\mathcal{G}_{sup(\bar{f})}$ (d).} \label{fig:supsup}
\end{figure}

\subsection{Supermodular Suppression}

The optimality and efficacy of SSP is highly dependent on the modular approximation of the supermodular function \cite{em_ref:nara05}. Therefore, it would be  desirable it we could suppress the influence of supermodular part $sup(E(\mathbf{x}))$ as much as possible when minimizing a QPBF $E(\mathbf{x})$ in our process. 

After submodular-supermodular decomposition, the influence of supermodular function $sup(E(\mathbf{x}))$ can be naturally measured by the sum of all negative edge capacities in $\mathcal{G}_{E(\mathbf{x})}$. Consequently, according to the definition of flipping transformation (see Fig.~\ref{fig:flip_trans_o} and \ref{fig:flip_trans_x}), to minimize the influence of $sup(E(\mathbf{x}))$ equals to minimize the following energy function $E'(\mathbf{y}) = \mathbf{y}^T \mathbf{\bar C} \mathbf{y}$, where $\mathbf{y} = [y_1, \cdots, y_n]^T$, $y_i \in \{-1, 1\}$ indicates whether flipping the $i$th variable ($y_i = -1$) or not ($y_i=1$) in supermodular suppression, $\mathbf{\bar C} = -\mathbf{C}$ and $\mathbf{C}$ is the edge capacity matrix of graph $\mathcal{G}_{E(\mathbf{x})}$. Clearly, minimizing $E'(\mathbf{y})$ is equivalent to minimize $E''(\mathbf{z}) = \mathbf{z}^T \mathbf{\bar C} \mathbf{z} + \mathbf{z}^T \mathbf{\bar C} \mathbf{1}$ with $\mathbf{z} = [z_1, \cdots, z_n]^T$ and $z_i = \frac{y_i+1}{2} \in \{0,1\}$, which is another dense and nonsubmodular energy function.

For simplicity and efficiency, instead of optimizing $E''(\mathbf{z})$ that is equally difficult as minimizing the original QPBF $E(\mathbf{x})$, in this paper, we present an efficient greedy process to suppress the supermodular part of $\mathcal{G}_{E(\mathbf{x})}$. Note that, we need only consider the edges linking two variable nodes, \ie, the \emph{variable edges}, in supermodular suppression. We first derive a descending order $\pi$ of all variable nodes according to the ratio of $\frac{\Sigma\mathrm{neg}}{\Sigma\mathrm{all}}$, where $\Sigma\mathrm{neg}$ and $\Sigma\mathrm{all}$ are the sum of absolute values of all negative capacities and all edge capacities, for each variable node, respectively. We then maintain a list $\mathbf{t}$ that records the variable nodes and their ratios of negative variable edges in the order of $\pi$. Then, we sequentially check node $x_{\pi_i}$ in $\mathbf{t}$. If the ratio is larger than $0.5$, we flip $x_{\pi_i}$ to $\bar{x}_{\pi_i}$ and update $\mathbf{t}$ accordingly. We repeat this process until no variable node need to be flipped. This can be done for any undirected graph, since in each iteration we decrease the overall influence of negative variable edges in $\mathcal{G}_{E(\mathbf{x})}$, and stop when no variable node can be flipped. Fig.~\ref{fig:supsup} shows an example of this process. Note that, there may exist multiple ways for supermodular suppression using different orders. However, the complexity of exhaustive search for the optimal flipping sequence is $2^n$. Besides, note that the above greedy process guarantees, in linear time, the influence of supermodular part $sup(E(\mathbf{x}))$ is smaller than that of submodular part $sub(E(\mathbf{x}))$. We also tested to use QPBO(P) to suppress supermodular part, since labeled nodes are guaranteed to be globally optimal. But, we found that it could barely produce better labeling. This reflects that for QPBFs with low supermodular ratios, ESSP with the proposed greedy suppression works as well as QPBO(P). Since for larger supermodular ratios few nodes will be labeled by QPBO(P), QPBO(P) cannot provide better suppression than the proposed greedy process. \if 0 In our experiments, we tested whether QPBO and QPBO-P can produce better flipping schemes. But, we found that initializing our algorithm by QPBO(P) is a better choice.\fi

In the above, we consider only negative variable edges in $\mathcal{G}_{E(\mathbf{x})}$. For negative edges $(o,x_u)$ linking indicator node $o$ and variable nodes $x_u$, which is called \emph{indicate edges}, we can equivalently replace them by edges $(\bar{o},x_u)$ with positive capacities. As a result, after supermodular suppression, graph $\mathcal{G}_{E(\mathbf{x})}$ contains two indicator nodes $o$ and $\bar{o}$.

\subsection{The Algorithm}

We now present our algorithm for general QPBF minimization in Algorithm~1. \if0\ref{alg:essp},\fi The proposed ESSP algorithm is an iterative refinement process. The initial labeling $\mathbf{x}^{(0)}$ can be the output of other methods or randomly generated. The performance of different initialization is compared in the experiments section. Each iteration of the ESSP algorithm consists of three major steps: (1) modular approximation,\footnote{Refer to \cite{em_ref:nara05} for more details about $\mathrm{modApproximation}(\cdot)$ in Algorithm~1. \if 0 \ref{alg:essp}.\fi} (2) maxflow computation, and (3) repermutation. \if 0 Given current labeling $\mathbf{x}^{(i-1)}$ and permutation $\pi^{(i-1)}$ , modular approximation generates a ``tight'' modular function $m^{(i)}(\mathbf{x})$ that approximates the supermodular part $sup(E(\mathbf{x})$ \cite{em_ref:nara05}. Note that the modular function $m^{(i)}(\mathbf{x})$ is not necessarily symmetric. As a result, we need to solve two st-mincut problems in the maxflow computation.\fi The process iteratively refine the initial labeling according to the random permutation $\pi^{(i)}$. Each permutation actually corresponds to a set of labelings. Different permutations may lead to different local minima. To avoid being trapped to a poor local minimum, for each iteration of ESSP, we try $K=5$ random permutations in our experiments.

\begin{figure}[htb]
\centering
\includegraphics[width=1\columnwidth]{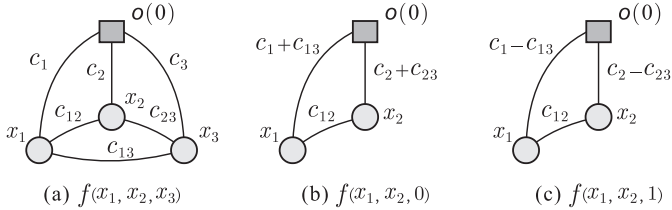}
\caption{Illustration of graph simplification: graph (a) characterizing function $f(x_1,x_2,x_3)$ and its simplified graph (b) characterizing function $f(x_1,x_2,0)$ and simplified graph (c) characterizing function $f(x_1,x_2,1)$.}\label{fig:simp}
\end{figure}

\noindent{\bf Graph Simplification.}~~Some methods, such as QPBO(P) may produce partial labelings \cite{em_ref:kolm07,em_ref:roth07}. Those labeled variables is guaranteed to be globally optimal. In this situations, we can simplify the graph and only focus on minimizing those unlabeled variables. Fig.~\ref{fig:simp} shows how to simplify a graph given a partial labeling.

\noindent{\bf Efficient Implementation.}~~If the modular approximation generates the same $m^{(i)}(\mathbf{x})$ as the last iteration, we need not run the maxflow algorithm. Furthermore, if the difference between $m^{(i)}(\mathbf{x})$ and $m^{(i-1)}(\mathbf{x})$ is only related to a small number of variables, the maxflow computation of last iteration can be efficiently reused \cite{em_ref:kohli07}. For a QPBF of large size, we can improve the efficiency by locally refining the initial labeling using our ESSP algorithm. Specifically, we need just simplify the graph energy by fixing some variables to their initial labels and refine other
variables.

\begin{algorithm}[t] \label{alg:essp}
   \caption{The ESSP Algorithm}
   \begin{small}
\begin{algorithmic}
   \STATE {\bfseries Input:} QPBF $E(\mathbf{x})$ and an initial labeling $\mathbf{x}^{(0)}$
   \STATE {\bfseries Output:} approximate minimizer $\mathbf{x}^*$ of $E(\mathbf{x})$
   \vspace{0.2cm}
   \STATE Construct graph characterization $\mathcal{G}_{E(\mathbf{x})}$ for $E(\mathbf{x})$
   \STATE Supermodular suppression of $\mathcal{G}_{E(\mathbf{x})}$
   \STATE Decompose $\mathcal{G}_{E(\mathbf{x})}$ to $\mathcal{G}_{sub(E(\mathbf{x}))}$ and $\mathcal{G}_{sup(E(\mathbf{x}))}$
   \IF{$isNotEmpty(\mathcal{G}_{sup(E(\mathbf{x}))}$)}
   \STATE $\pi^{(0)} \leftarrow \mathrm{randomPermutation}(\mathbf{x}^{(0)})$
   \REPEAT
   \STATE $m^{(i)}(\mathbf{x}) \leftarrow \mathrm{modApproximation}(sup(E(\mathbf{x})), \pi^{(i-1)})$
   \STATE $\mathbf{x}^{(i)} \leftarrow \arg\min_{\mathbf{x}} \big[sub(E(\mathbf{x})) + m^{(i)}(\mathbf{x}) \big]$
   \STATE $\pi^{(i)} \leftarrow \mathrm{randPermutation}(\mathbf{x}^{(i)})$
   \UNTIL{$\mathrm{isEqual}(\mathbf{x}^{(i)}, \mathbf{x}^{(i-1)})$ is $true$}
   \ELSE
   \STATE Solving an st-mincut on graph $\mathcal{G}_{sub(E(\mathbf{x}))}$
   \ENDIF
   \STATE Converting flipping variables to original forms in $\mathbf{x}^*$
\end{algorithmic}
   \end{small}
\end{algorithm}

\noindent{\bf Optimality and Complexity.}~~General QPBF minimization is NP-hard, thus we can only obtain a suboptimal labeling by ESSP. In each iteration, ESSP actually minimizes an upper bound of the objective function. Given an initial permutation, ESSP iteratively
generates a refined labeling with smaller energy. According to the convergence of SSP \cite{em_ref:nara05}, ESSP is also guaranteed to converge to a local optimum. Besides, automatic submodular-supermodular decomposition also enables us to tell whether the solution is a global optimum or not. Except for maxflow computation, all other steps of ESSP algorithm are of linear complexity. Thus, the complexity of ESSP algorithm is $O(T M)$, where $T$ is the number of iterations and $M$ is the state-of-the-art complexity of maxflow computation on undirected graphs \cite{em_ref:boros02,em_ref:queyranne98,em_ref:onmgc2012}. Compared to the directed graph formulation and QPBO \cite{em_ref:kolm07,em_ref:roth07}, ESSP has half number of vertices in the undirected graph characterization. From our experiments, we find that ESSP usually converges within a small number of iterations when fed by a proper initialization.

\section{Experimental Results}
\label{sec:exp}

We now evaluate the performance of our ESSP algorithm and several state-of-the-art methods, \ie, BP \cite{em_ref:weiss01}, TRW-S \cite{em_ref:kolmogorov06} and QPBO(P,I) \cite{em_ref:kolm07,em_ref:roth07}, on minimizing dense and nonsubmodular energy functions.\footnote{For QPBO, we used the authors' original implementation. For BP and TRW-S, we used the speed-up implementation introduced in \cite{em_ref:kolm06}. All source codes used in our experiments, including the proposed ESSP, are publicly available online.}

\begin{figure}[t]
\centering
\includegraphics[width=0.98\columnwidth]{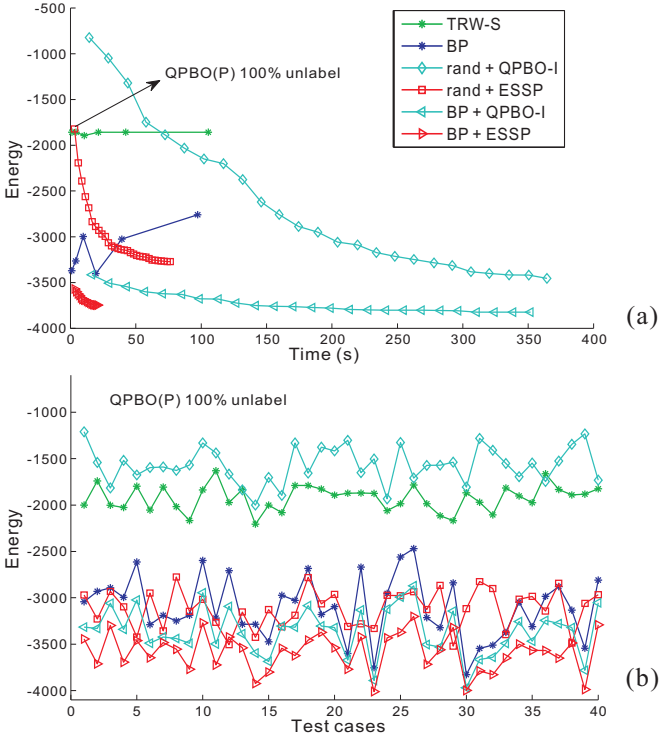}
\caption{Comparison results of six algorithms for dense and nonsubmodular QPBF minimization: (a)~average energy vs. running time curves for $20$ QPBFs of $500$ variables with $C_r=50\%$, $S_r=50\%$ and $U_g=0.1$; (b)~the obtained energies for $40$ random QPBFs under the same configuration. Each algorithm was forced to run at most $60$ seconds in (b).}\label{fig:generalcomp}
\end{figure}

\noindent{\bf Synthetic Comparison.}~~Our first evaluation was based on synthetic QPBFs. We did synthetic comparison due to two reasons. First, unlike the classical energies defined in $4$-connected grid graphs, dense and nonsubmodular QPBFs have only been studied and applied in computer vision recently. Both its potential and difficulty may not necessarily limited in the range of real energies currently available in computer vision. Instead, it would be much more useful if we could figure out what factors of the energy function primarily affect the optimization hardness. Different combinations of these factors define several hardness situations of the problem. We are more interested in finding out the best combination of existing techniques in minimizing dense and nonsubmodular energies at each hardness situation. This will provide a useful guidance for future study and application of such energies in practice. Using synthetic energies, we can easily control the degree of these factors and systematically compare the performance of existing methods, while current real energies may not cover a large spectrum of these factors yet. Second, as stated in \cite{em_ref:meltzer05,em_ref:kolm06}, the performance of different methods for real energies depends on both the power of energy models and the efficacy of energy minimization methods. Using synthetic energies, we can rule out the influence of energy models and purely focus on evaluating the performance of optimization methods only.

\begin{figure*}[t]
\centering
\includegraphics[width=1\textwidth]{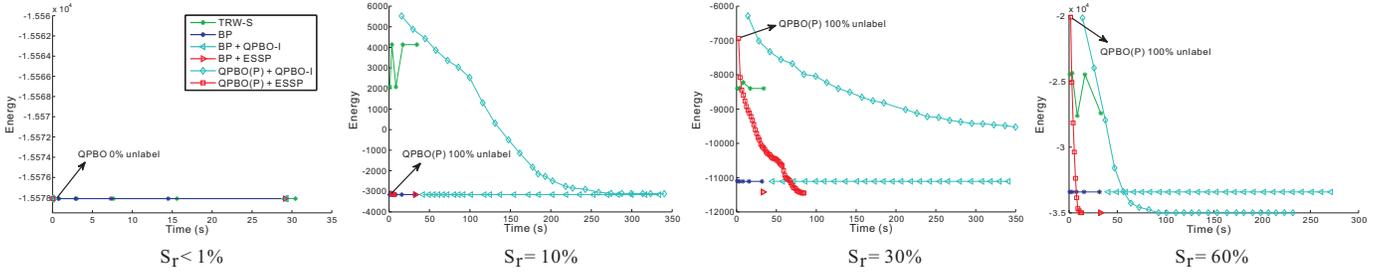}
\caption{The influence of supermodular ratio $S_r$ to the performance of different methods. For each supermodular ratio, we tested $10$ QPBFs with $500$ variables and fixed $C_r=50\%$ and $U_g=0.1$.}\label{fig:supercomp}
\end{figure*}

\begin{figure*}[t]
\centering
\includegraphics[width=1\textwidth]{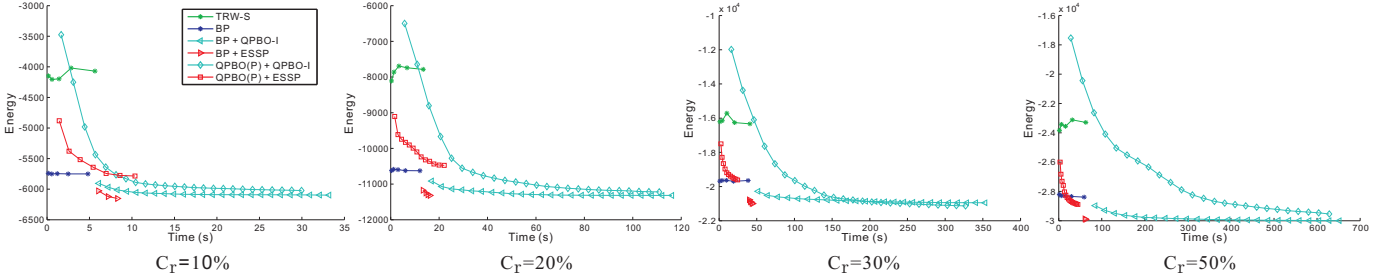}
\caption{The influence of connectivity $C_r$. Each curve was obtained by marginalizing $24 \times 20$ curves.}\label{fig:cr}
\end{figure*}

We generated synthetic binary energy functions with the form of Eq.~(\ref{eq:qpbfstd}), of which the coefficients $\theta_u$ and $\theta_{uv}$ were randomly generated from a uniform distribution $U(0,10)$. Proposition~2 \if0\ref{prop:qpbfgeneral}\fi guarantees that, by this means, we can produce any QPBF with random coefficients. We empirically find three major factors that impact the hardness of the problem: (1)~\emph{connectivity} $C_r=\frac{2 e^\mathcal{V}}{n^2}$ ($n$ is the number of variable nodes, $e^\mathcal{V}$ is the number of variable edges, \ie, edges linking variable nodes), (2)~\emph{supermodular ratio} $S_r=\frac{e^{\mathcal{V}-}}{e^\mathcal{V}}$ ($e^{\mathcal{V}-}$ and $e^{\mathcal{V}+}$ is the number of variable edges of negative and positive capacities, respectively, and $e^\mathcal{V}=e^{\mathcal{V}-}+e^{\mathcal{V}+}$), and (3)~\emph{unary guidance} $U_g=\frac{\mathrm{mean}_u(|c_{uo}|,|c_{u\bar{o}}|)}{\mathrm{mean}_{uv}{|c_{uv}|}}$ ($c_{uo}$, $c_{u\bar{o}}$ and $c_{uv}$ is the capacity of edge $uo$, $u\bar{o}$ and $uv$, respectively).

Fig.~\ref{fig:generalcomp}(a) plots the average energy vs. time curves of six combinations of existing methods in minimizing $20$ random QPBFs with $500$ variables and $C_r=50\%$, $S_r=50\%$ and $U_g=0.1$. The tested methods include BP and TRW-S (both were run for $5000$ iterations), QPBO-I and ESSP with random initialization, QPBO-I and ESSP initialized by BP ($50$ iterations). We do not show the results of QPBO and QPBO-P since under this configuration almost all variables were unlabeled by QPBO(P). We can see that for those optimization methods based on solving LP relaxation, such as TRW-S and QPBO, minimizing dense and nonsubmodular energy functions is extremely difficult. In contrast, other methods such as BP, ESSP and QPBO-I can obtain a relatively lower energy. Note that BP+ESSP clearly outperforms other methods in this test. This is due to the good initialization provided by BP (as compared to other methods) and the fast convergence speed of ESSP. As discussed later, the appealing performance of BP in this test is mainly attributed to the strong unary guidance. Fig~\ref{fig:generalcomp}(b) shows the produced energy of the six solvers for $40$ QPBFs generated using the same configuration. This time, we adjusted the maximal iterations of each solver to make them produce results within $60$ seconds. We can see that using comparable time, QPBO-I with random initialization generated the highest energies. But, using the same random initialization, ESSP obtained much lower energy.

Then, we evaluate the influence of supermodularity to the performance of different methods. Fig.~\ref{fig:supercomp} shows the comparison results. Note that if the supermodular ratio $C_r$ is close to zero, then QPBO(P) is able to label all variables thus we can efficiently obtain a global optimum labeling. Note that although all tested methods produced the same energy to QPBO(P) in the first figure, only QPBO(P) and the algorithms initialized by QPBO(P) are theoretically guaranteed to obtain the global minimum energy.

\begin{figure*}[t]
\centering
\includegraphics[width=1\textwidth]{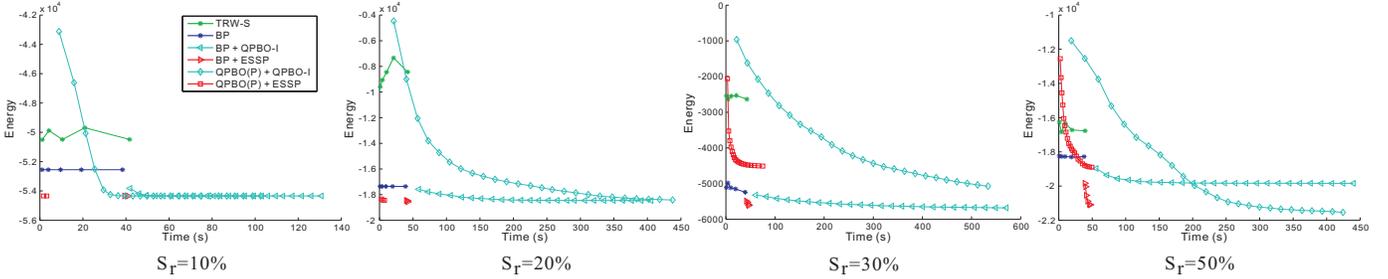}
\caption{The influence of supermodularity ratio $S_r$. Each curve was obtained by marginalizing $24 \times 20$ curves.}\label{fig:sr}
\end{figure*}

\begin{figure*}[t]
\centering
\includegraphics[width=1\textwidth]{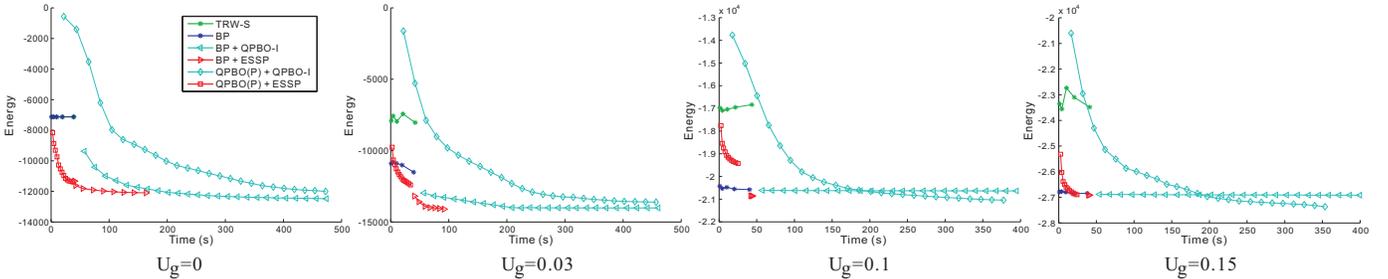}
\caption{The influence of unary guidance $U_g$. Each curve was generated by marginalizing $36 \times 20$ curves.}\label{fig:ug}
\end{figure*}

\if 0 \noindent{\bf A Comparative Study.}~~We now describe our comparative study on the influence of connectivity $C_r$, supermodularity $S_r$ and unary guidance $U_g$ to the performance of energy minimization methods. In our experiment,\fi Then, to make a thorough comparison, we randomly generated $2880$ QPBFs of $600$ variables that covered a large range of connectivity $C_r \in \{0.1,0.2,0.3,0.4,0.5,0.6\}$, supermodularity ratio $S_r \in \{0.1,0.2,0.3,0.4,0.5,0.6\}$ and unary guidance $U_g \in \{0, 0.03, 0.1, 0.15\}$. For each configuration of $C_r$, $S_r$ and $U_g$, we tested $20$ QPBFs. We plotted the performance curves of different methods for each factor value by marginalizing all its energy vs. time curves under this factor value. Fig.~\ref{fig:cr}, \ref{fig:sr} and \ref{fig:ug} show the performance curves of six testing methods under different situations of connectivity, supermodularity and unary guidance.

Since the performance and tradeoff of existing techniques on minimizing sparse energies defined in $4$-connected grid graphs is well-understood in computer vision \cite{em_ref:mrfcomp08,em_ref:tappen03}, as shown in Fig.~\ref{fig:cr}, our study focused on minimizing dense energy functions. We can see that the convergence speed of ESSP clearly outperforms QPBO-I using the same initialization as the connectivity increases. It is worth to note that the initialization of BP is usually better than QPBO(P). This is mainly because for dense and nonsubmodular energies, most variables cannot be certainly labeled by QPBO(P). Thus, in this situation, initialization by QPBO(P) is very close to random initialization.

Fig.~\ref{fig:sr} tells us that lower $S_r$ implies less hardness of the problem. All methods, except for TRW-S, can obtain a comparable low energy for $S_r<20\%$. ESSP and QPBO-I provided by a good initialization can always obtain the lowest energy. But, QPBO-I still needs much more time to converge than ESSP. We also tested the performance of different methods when supermodular ratio $S_r$ is very small. Under this configuration, QPBO(P) is able to label almost all variables, thus we can certainly obtain a global optimum labeling. It is also worth to note that for energy functions with low $S_r$, we can use the partial labeling generated by QPBO(P) to simplify the graph of ESSP and run ESSP to seek a suboptimal labeling for other variables. In our tests, however, we found that this is equivalent to initializing ESSP by the output of QPBO(P). The optimality of the partial labeling generated by QPBO(P) can also be preserved by ESSP.

\begin{table*} [t]
\centering {\small
\caption{Recommended methods for various QPBF minimization tasks}
\renewcommand{\multirowsetup}{\centering}
\begin{tabular}{|c|c|cc|}
\hline \begin{minipage}[t]{4.5cm}{Sparser connectivity {\small $C_r<1\%$}} \end{minipage} & \multicolumn{3}{c|}{Denser connectivity {\small $C_r>5\%$}} \\
\hline \raisebox{-0.5ex}[0pt]{\multirow{4}{4cm}{Refer to \cite{em_ref:mrfcomp08,em_ref:tappen03}}} & \begin{minipage}[t]{4.5cm}{Lower supermodularity {\small $S_r\approx 0$}} \end{minipage} & \multicolumn{2}{c|}{\begin{minipage}[t]{5.2cm}{Higher supermodularity {\small $S_r > 10\%$}} \end{minipage}} \\
\cline{2-4} & \multirow{3}{3cm}{QPBO(P)+ESSP} & \multirow{1}{3cm}{{\small $U_g\approx 0$}} \vline& {\small $U_g > 5\%$} \\
\cline{3-4} & & \multirow{2}{3cm}{rand+ESSP+I} \vline& \multirow{2}{2.2cm}{BP+ESSP} \\
& & \multirow{1}{3cm}{ } \vline& \\
\hline
\end{tabular} \label{tab:recmethod}}
\end{table*}

From Fig.~\ref{fig:ug}, we can see that if the energy function provides weak unary guidance, \ie, $U_g \approx 0$, BP and TRW-S equally perform worse than ESSP and QPBO-I with random initialization. However, when the unary guidance is strong enough, BP can usually obtain a relatively low energy. The unary guidance actually reflects the consistency of unary and pairwise potentials. For an energy function in computer vision, $U_g=0$ means that the unary terms caused by data likelihood fully contradict the unary terms contributed by pairwise potentials, thus may apparently increase the hardness of the problem.

\noindent{\bf Image Restoration.}~~We also tested different methods on binary image restoration based on the KAIST Chinese character database. Specifically, we $50$ images to train a dense pairwise prior function for a particular character $\psi(\mathbf{x}) = \sum_{(u,v) \in \mathcal{N}^2} f_{uv} (x_u, x_v)$, where $\mathcal{N}$ is the set of pixels, $f_{uv} (x_u, x_v)$ is the appearance frequency of labeling $(x_u, x_v)$ for pixel $u$ and $v$ in the training data. As we consider every pixel pair and the desired labeling can be either $00$, $01$, $10$ or $11$, the prior function is certainly dense and nonsubmodular. In our experiments, the average $C_r$ is above $70\%$, and average $S_r$ is about $20\%$. To avoid over-fitting, we omit those labelings whose appearance frequency in the training data is less than $10\%$. Using the trained dense prior function $\psi(\mathbf{x})$, we restore a noisy image by minimizing the energy function $E(\mathbf{x}) = \alpha \sum_u \frac{-1}{1+|Y_u-x_u|}  + \beta \psi(\mathbf{x})$, where $Y_u$ is the appearance of noisy image for pixel $u$, $x_u
$ is its label. Fig.~\ref{fig:charrestore} shows the restoration results for one character.

\begin{figure}[t]
\centering
\includegraphics[width=1\columnwidth]{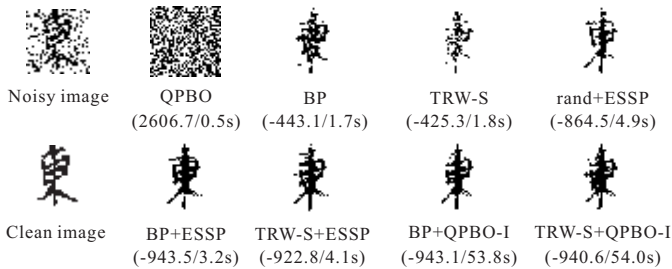}
\caption{Binary image restoration of different methods. The energy and time for each method are shown in the format of `energy/time(s)'. The lower bound of the energy function is $-980.0$.} \label{fig:charrestore}
\end{figure}

\noindent{\bf Discussion.}~~In our experiments, we found that TRW-S consistently performs worse than other methods. This is in contrast to their performances for sparse energy functions \cite{em_ref:meltzer05,em_ref:mrfcomp08}. The difficulty of TRW-S on solving dense energy functions was also found and analyzed in \cite{em_ref:kolm06}. It seems that dense connectivity has much more negative influence to the performance of LP relaxation based methods, \eg, QPBO and TRW-S, than other techniques, \eg, BP, QPBO-I and ESSP.

\if 0 Indeed, ESSP does similar thing as QPBO-I.\fi The experiments show that ESSP has much faster convergence speed than QPBO-I given the same initialization. Note that we did not make any speed optimization in our implementation. As shown in Fig.~\ref{fig:cr}, \ref{fig:sr} and \ref{fig:ug}, QPBO-I is able to produce a lower energy than ESSP if given enough long time. Both the performance of ESSP and QPBO-I depends on the goodness of initialization. For ESSP, bad initialization leads to a poor local minimum, while good initialization leads to a much lower energy (see Fig.~\ref{fig:generalcomp}). In contrast, for QPBO-I, bad initialization means much slower convergence speed than good initialization.

Our comparative study helps to understand the hardness of dense and nonsubmodular energy minimization. \if 0 This suggests future research directions of improving each method.\fi More importantly, from our study, we can make some promising recommendations of existing methods in different situations that performs the best to minimize such challenging energies, as summarized in Table~\ref{tab:recmethod}. Generally, no existing methods are able to handle all situations. But, the proposed ESSP algorithm can always be used to efficiently improve the labeling obtained by existing methods for dense and nonsubmodular binary MRF energy functions.  Note that, $U_g\approx 0$ corresponds to the most difficult case, since both ESSP and QPBO-I cannot be initialized properly.

\section{Conclusions}
\label{sec:conc}

We have proposed a new algorithm, namely ESSP, to minimize dense and nonsubmodular energy functions. Such kind of energies has recently shown great potentials in computer vision. Our approach is based on an undirected graph characterization of QPBFs, which enables us to extend the classical submodular-supermodular procedure \cite{em_ref:nara05} to a general solver for generic binary labeling problems. Experiments show that for dense and nonsubmodular energy functions, our ESSP algorithm can usually improve the results of existing methods with reasonable time.

We have also provided a thorough comparative study on minimizing dense and nonsubmodular QPBFs by existing techniques. We empirically find out three important factors, \ie, connectivity, supermodularity and unary guidance, that closely relate to the hardness of the problem. Based on our study, we finally make several reasonable recommendations of combining existing methods in different situations to minimize such challenging energies. We believe our study presents a positive guidance for future modeling and applications of general energy functions in computer vision.


There are several open questions about the ESSP algorithm. \if 0 Since ESSP is an extension of SSP, its optimality bound is still unknown.\fi Like SSP, ESSP is also an iterative refinement process. It would be desirable if we could analyze the bound on the maximal number of iterations. From our experiments, we observe that ESSP usually converges to a local minimum after a small number of iterations. Besides, combining ESSP with other recent optimization methods, such as \cite{em_ref:onmgc2012,em_ref:komo08}, is an interesting direction for future work. \if 0 Furthermore, we also plan to use our findings to minimize currently available dense and nonsubmodular energies in computer vision. Besides, we also plan to analyze existing methods on solving other challenging energy functions in computer vision, such as higher-order MRFs.\fi

\ifCLASSOPTIONcompsoc
  \section*{Acknowledgments}
\else
  \section*{Acknowledgment}
\fi

This work is supported by the Program for New Century Excellent Talents in University (NCET-11-0365), National Nature Science Foundation of China (61100121), and National Science and Technology Support Project (2013BAK01B01).

\ifCLASSOPTIONcaptionsoff
  \newpage
\fi

\bibliographystyle{IEEEtran}
\bibliography{IEEEabrv,essp}


%







\end{document}